\renewcommand{\(}{\left(}
\renewcommand{\)}{\right)}
\renewcommand{\[}{\left[}
\renewcommand{\]}{\right]}
\renewcommand{\L}{\mathbf{L}}
\newcommand{\R}{\mathbf{R}}
\newcommand{\0}{\mathbf{0}}
\newcommand{\I}{\mathbf{I}}
\newcommand{\J}{\mathbf{J}}
\newcommand{\C}{\mathbf{C}}
\renewcommand{\P}{\mathbf{P}}
\newcommand{\M}{\mathbf{M}}
\newcommand{\Q}{\mathbf{Q}}
\renewcommand{\log}[1]{{\rm{log}}#1}
\newtheorem{definition}{Definition}
\newtheorem{theorem}{Theorem}
\newtheorem{prop}{Proposition}
\begin{document}

\title{Group Symmetry and non-Gaussian Covariance Estimation}

\author{Ilya Soloveychik}
\author{Ami Wiesel}
\affil{Selim and Rachel Benin
School of Computer Science and Engineering, Hebrew University of Jerusalem, Israel}
\affil{ilya.soloveychik@mail.huji.ac.il, amiw@cs.huji.ac.il}

\maketitle

\begin{abstract}
We consider robust covariance estimation with group symmetry constraints. Non-Gaussian
covariance estimation, e.g.,
Tyler scatter estimator and Multivariate Generalized Gaussian distribution methods,
usually involve non-convex
minimization problems. Recently, it was shown that the underlying principle behind their
success is an extended form
of convexity over the geodesics in the manifold of positive definite matrices.
A modern approach to improve estimation accuracy is to exploit prior knowledge via
additional constraints, e.g., restricting the attention to specific classes
of covariances which adhere to prior symmetry structures. In this paper, we prove that
such group symmetry constraints
are also geodesically convex and can therefore be incorporated into various non-Gaussian
covariance estimators.
Practical examples of such sets include: circulant, persymmetric and complex/quaternion
proper structures.
We provide a simple numerical technique for finding maximum likelihood estimates under such
constraints,
and demonstrate their performance advantage using synthetic experiments.
\end{abstract}

\begin{keywords}
geodesic convexity, non-Gaussian covariance estimation.
\end{keywords}

\section{Introduction}
Covariance estimation is a fundamental problem in the field of statistical signal
processing.
Many algorithms for detection and estimation rely on accurate covariance matrix estimation
\cite{covest_imp1,covest_imp2}.
Roughly speaking, the problem is tractable as long as the global maximum likelihood
solution can be efficiently found (or approximated). Thus, it is important to understand
whether the associated negative-log-likelihood minimization problem is convex. Following this line of thought, we
combine two ideas.
First, there is an increasing interest in covariance estimation in non-Gaussian
distributions which are typically non-convex
but have been shown to be geodesically convex \cite{ami1,ami3}. Second, many problems
adhere to known symmetry constraints which can be exploited in the estimation.
Recently, \cite{venkat} addressed such structures in the Gaussian setting. In this paper,
we will consider them in non-Gaussian covariance estimation using the theory of geodesic
convexity.

In many applications, the assumption of normal data is not realistic \cite{abr,pascal1}.
In such scenarios, improved performance may be obtained by resorting to more general
distributions, such as Generalized Gaussian and Elliptical distributions
\cite{ellipt,ami4}. The associated Maximum Likelihood optimization usually do not lead to closed form
solutions and iterative algorithms are required \cite{ami1,pascal1}.
One of the most prominent robust methods is the Tyler's method for covariance
matrix estimation in scaled Gaussian models, which has been successfully applied to
different practical applications ranging from array processing to sensor networks  \cite{tyler}.
It has been extended to other settings involving
regularization and incomplete data \cite{abr} - \cite{ami2}.
Recently, it was shown that the underlying principle behind these successful non-convex
optimizations is the geodesic convexity \cite{rapcsak, ami}. This principle provides more insight on the analysis and design of robust covariance estimation
methods, and paves the road to numerous extensions based on $g$-convexity, e.g.,
regularization \cite{ami1} and their combination with Kronecker structures \cite{ami}.

%% showed that Tyler's method appears to be a $g$-convex problem which gives us motivation
%to explore other properties of convex problems
%%and try to apply them to $g$-convex ones. Maximum Likelihood (ML) covariance estimation
%in non-Gaussian models involves a non-convex optimization.
%%Nonetheless, different works proved that the global solution can be efficiently found via
%simple fixed point iterations \cite{tyler}, \cite{pascal1}.
%%Analysis of regularized solutions has been addressed in \cite{chen}. Recently,
%\cite{ami1} proved that these problems are indeed $g$-convex
%%in the unknown covariance.

Over the last years, many works have been developed in the area of estimating covariance matrices
possessing some additional knowledge such as sparsity or structure \cite{levina}. Our work is motivated by \cite{venkat} which considered group symmetry structures. In particular, \cite{venkat} addressed symmetry constraints in random fields
of physical phenomena, Bayesian models and cyclostationary processes. In addition, it is well known that
circulant matrices are invariant to shifts \cite{dembo,circ}. Symmetric persymmetric (bisymmetric) matrices are
invariant under the exchange-operator \cite{pascal_per,dimaio}. Proper complex normal
distributions are defined via their invariance to rotations with respect to the real and
imaginary axis \cite{kay}. Proper quaternion distributions follow invariances with respect
to isoclinic rotations \cite{quat_properness,alba}.  All of these properties have been
successfully exploited in covariance estimation in the multivariate Gaussian distribution.
Many of them have also been considered in non-Gaussian distributions via problem-specific
fixed point iterations and algorithm-dependent existence, uniqueness and convergence
proofs.

%Another importance of imposing structure on the covariance matrix is due
%to the lack of sample data. A significant constraint of many modern data
%analysis problems is the small number of samples available relative to the
%dimension of the data \cite{levina}. Such high-dimensional settings arise in a range of
%applications in bioinformatics, climate studies, and economics. A fundamental problem that
%arises in the high-
%dimensional regime is the poor behaviour of standard covariance estimators. Accordingly, a
%fruitful and active research agenda over
%the last few years has been the development of methods for high-dimensional statistical
%inference and modeling that take into account structure in the
%underlying model. Some examples of structural assumptions on statistical
%models include models with a few latent factors (leading to low-rank covariance
%matrices), models specified by banded or sparse covariance matrices, and
%Markov or graphical models.
%
%Following \cite{venkat} we start from exploiting permutational
%symmetries in covariance matrices and extend their result to a much more
%general notion of symmetry.

The main result in this paper is that the set of positive definite matrices which are
invariant under a conjugation action of a subgroup of orthogonal transformations is
$g$-convex on their respective manifold. Together with the $g$-convexity of various
non-Gaussian negative-log-likelihoods,
this implies that the global constrained maximum likelihood solution can be efficiently
found using standard descent algorithms. This provides a unified framework for robust
covariance estimation with group symmetry constraints.
Unlike previous approaches, our results are not specific to any distribution, symmetry set
or even numerical algorithm. As a byproduct, we provide a few results on
specific symmetry groups and reformulate proper complex and quaternion structures using
a finite number of rotation-invariant constraints. For completeness, we also propose a
simple numerical method for solving these problems, although we emphasize that other
descent algorithms can be used instead. Finally, we demonstrate the performance advantage
of our framework via synthetic simulations in a non-Gaussian proper quaternion
environment.

The paper is organized in the following form. First, we give an outline of $g$-convexity and matrix group symmetry. Then the main result is formulated and examples of symmetry matrix classes are given. Finally, we provide a computational algorithm and numerical results.

\section{Geodesic Convexity}
Geodesic convexity is a generalization of the notion of convexity in linear spaces.
We therefore begin with a brief review on $g$-convexity on the manifold $\mathcal{P}(p)$
of positive definite matrices $p \times p$.
More details are available in \cite{rapcsak}, \cite{ami2}. With each $\Q_0,\Q_1\in
\mathcal{P}(p)$ we associate the following geodesic
\begin{eqnarray}\label{geodesics}
 \Q_t=\Q_0^{\frac{1}{2}}\(\Q_0^{-\frac{1}{2}}\Q_1\Q_0^{-\frac{1}{2}}\)^t\Q_0^{\frac{1}{2}},\quad
 t\in[0,1].
\end{eqnarray}

\begin{definition}
 A set ${\mathcal{N}}\in \mathcal{P}(p)$ is $g$-convex if for any $\Q_0,\Q_1\in
 \mathcal{N}$ the geodesic $\Q_t$ lies in $\mathcal{N}$.
\end{definition}

\begin{definition}
Given a $g$-convex subset $\mathcal{N} \subset \mathcal{P}(p)$,
we say that a function $f$ is $g$-convex on $\mathcal{N}$ if for any two points
$\Q_0,\Q_1 \in \mathcal{N}, f(\Q_t)\leq tf(\Q_0)+(1-t)f(\Q_1), \forall t \in [0,1]$.
\end{definition}

The advantage of $g$-convexity stems from the following result \cite{ami2}
\begin{prop}
 Any local minimum of a $g$-convex function over a $g$-convex set is a global minimum.
 \label{general_convexity}
\end{prop}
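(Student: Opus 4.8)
The plan is to argue by contradiction, reducing the statement to the elementary fact that a convex function of one real variable on $[0,1]$ cannot have an interior-or-endpoint local minimum that is not global. Suppose $\Q^* \in \mathcal{N}$ is a local minimum of $f$ but not a global one, so there is $\Q_1 \in \mathcal{N}$ with $f(\Q_1) < f(\Q^*)$. I would set $\Q_0 = \Q^*$ and form the geodesic $\Q_t$ of \eqref{geodesics} joining $\Q_0$ to $\Q_1$. Since $\mathcal{N}$ is $g$-convex, $\Q_t \in \mathcal{N}$ for every $t \in [0,1]$, so the scalar function $\phi(t) := f(\Q_t)$ is well defined on $[0,1]$; by $g$-convexity of $f$ it is convex there, with $\phi(0) = f(\Q^*)$ and $\phi(1) = f(\Q_1) < \phi(0)$.

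The first step is to show that $\Q_t \to \Q_0$ as $t \to 0^+$, so that the local-minimum property of $\Q^*$ can be invoked for small $t$. This is immediate from the closed form \eqref{geodesics}: the matrix $\Q_0^{-1/2}\Q_1\Q_0^{-1/2}$ is positive definite, its power $(\Q_0^{-1/2}\Q_1\Q_0^{-1/2})^t$ is continuous in $t$ through its spectral representation (each eigenvalue $\lambda > 0$ satisfies $\lambda^t \to 1$), and hence $\Q_t \to \Q_0^{1/2}\I\,\Q_0^{1/2} = \Q_0$. Consequently there exists $\epsilon > 0$ with $\phi(s) \geq \phi(0)$ for all $s \in (0,\epsilon)$.

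The second step propagates this inequality to $t = 1$ using convexity. Fix $s \in (0,\epsilon)$ and write $s = (1-s)\cdot 0 + s\cdot 1$; convexity of $\phi$ gives $\phi(s) \leq (1-s)\phi(0) + s\,\phi(1)$. Combining with $\phi(s) \geq \phi(0)$ yields $\phi(0) \leq (1-s)\phi(0) + s\,\phi(1)$, i.e. $\phi(0) \leq \phi(1)$, contradicting $\phi(1) < \phi(0)$. Therefore no such $\Q_1$ exists and $\Q^*$ is a global minimum.

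There is no deep obstacle here; the only point deserving care is the continuity of the geodesic at its starting point, i.e. that the segment $\{\Q_t : t \in (0,\delta)\}$ lies in any prescribed neighborhood of $\Q_0$ once $\delta$ is small enough, which is exactly what licenses the use of local optimality. This is handled by the spectral argument above. I would also note in passing that the definition of a $g$-convex set is symmetric in its two endpoints, so it is legitimate to relabel $\Q^*$ as $\Q_0$ regardless of which role it plays.
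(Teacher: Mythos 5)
Your proof is correct and complete. Note, however, that the paper does not actually prove Proposition~\ref{general_convexity}: it is stated as a known fact imported from the cited reference on regularized covariance estimation, so there is no in-paper argument to compare yours against. What you have written is the standard proof --- restrict $f$ to the geodesic joining the local minimizer to a hypothetical better point, observe that the restriction $\phi(t)=f(\Q_t)$ is a convex scalar function, use continuity of $t\mapsto\Q_t$ at $t=0$ (which you rightly justify via the spectral representation of $(\Q_0^{-1/2}\Q_1\Q_0^{-1/2})^t$) to transfer local optimality to $\phi$ near $0$, and then let one-dimensional convexity do the rest. The only point worth flagging is that the paper's Definition~2 writes the convexity inequality as $f(\Q_t)\leq tf(\Q_0)+(1-t)f(\Q_1)$, with the weights attached to the wrong endpoints (taken literally it would force $f(\Q_0)\le f(\Q_1)$ for every pair); you have silently used the intended convention $f(\Q_t)\le (1-t)f(\Q_0)+tf(\Q_1)$, which is the correct reading and the one your second step requires.
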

Finding local minimum is usually easy and hence $g$-convexity guarantees that a global
solution can also be efficiently found.

Recently, it was shown that the negative-log-likelihoods of many popular non-Gaussian
distributions are $g$-convex. Two examples are:
\begin{itemize}
 \item Tyler's \cite{ami1}
\begin{eqnarray}\label{ml1}
 L\(\{\mathbf{x}_i\}_{i=1}^n;\Q\)=\frac{p}{n}\sum_{i=1}^n \log(\mathbf{x}_i^T
 \Q^{-1}\mathbf{x}_i) + \log|\Q|,
\end{eqnarray}
 \item Mutlivariate Generalized Gaussian Distribution \cite{ami3}
 \begin{eqnarray}\label{ml2}
 L\(\{\mathbf{x}_i\}_{i=1}^n;\Q\)= \frac{1}{n} \sum_{i=1}^n (\mathbf{x}_i^T
 \Q^{-1}\mathbf{x}_i)^\beta + \log|\Q|,
\end{eqnarray}
\end{itemize}
where $\beta$ is the shape parameter.

Together with Proposition \ref{general_convexity} above, \cite{ami1,ami3} proved that
simple descent algorithm converge to the
global estimate in these distributions. In the next section, we will show that this is
also true when using symmetry invariance constraints which are also $g$-convex.

\section{Matrix group symmetry}
In order to improve the accuracy of covariance estimators it is common to add constraints
based on prior knowledge. Of course, this priors can only be exploited if the constraints
are convex
and the associated optimization can be efficiently solved.
Recently, \cite{venkat} proposed the use of group symmetry constraints which are indeed
convex (actually linear)
and can be incorporated into a Gaussian setting. The main result in this paper is that
such sets are
also $g$-convex and can also be utilized in non-Gaussian settings.

%More formally, we investigate multivariate non-Gaussians specified by
%invariant covariance matrices:
%\begin{eqnarray} \label{congr}
%\Q = L \Q L^T, \forall L \in \mathcal{K},
%\end{eqnarray}
%where
Let $\mathcal{K}$ be a set\footnote{Here we treat the case of finite $\mathcal{K}$, but the result can be easily generalized to the infinite case.} of orthogonal matrices. Following \cite{venkat}, we formally assume that this set is actually a multiplicative group. Associated with $\mathcal{K}$, we define the fixed-point subset $\mathcal{F} \subset
\mathcal{P}(p)$ of
matrices that are invariant with respect to the conjugation by each element of
$\mathcal{K}$:
\begin{eqnarray}\label{FK}
\mathcal{F}(\mathcal{K}) = \{ \Q \in \mathcal{P}(p) | \Q = \L \Q \L^T, \forall \L \in
\mathcal{K} \}.
\end{eqnarray}

\begin{theorem} \label{main_theorem}
The set $\mathcal{F}(\mathcal{K})$ in (\ref{FK}) is $g$-convex.
\begin{proof}
First note that $\Q = \L \Q \L^T$ is equivalent to $\Q\L = \L\Q$. Now, assume $\Q_0, \Q_1 \in \mathcal{F}(\mathcal{K})$.
Let us show that the geogesic (\ref{geodesics}) lies in $\mathcal{F}(\mathcal{K})$.
Choose $\L \in \mathcal{K}$, $\L \Q_0 = \Q_0 \L, \L \Q_1 = \Q_1 \L$. Let $\M$ be a
diagonalizable matrix and $f$ a smooth function, then we can think of $f(\M)$ as of $f$ acting
on the eigenvalues of $\M$ in the orthonormal eigenbasis of $\M$. For any diagonalizable
matrix $\M$ it commutes with $\P$ iff $f(\M)$ commutes with $\P$ for any smooth function $f$,
also if two matrices $\M_1$ and $\M_2$ commute with $\P$, then their product $\M_1 \M_2$
commutes with $\P$.
This implies that $\Q_0^{-\frac{1}{2}}\Q_1\Q_0^{-\frac{1}{2}}$ commutes with $\L$,
thus $\(\Q_0^{-\frac{1}{2}}\Q_1\Q_0^{-\frac{1}{2}}\)^t$ also commutes with $\L$ and the
whole $\Q_t$ commutes with $\L$. Thus the geodesic (\ref{geodesics}) lies in
$\mathcal{F}(\mathcal{K})$ and the set
$\mathcal{F}(\mathcal{K})$ is $g$-convex.
\end{proof}
\end{theorem}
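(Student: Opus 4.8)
The plan is to show that the geodesic $\Q_t$ defined in (\ref{geodesics}) inherits the symmetry of its endpoints $\Q_0,\Q_1\in\mathcal{F}(\mathcal{K})$. The first observation, which reformulates the constraint conveniently, is that since each $\L\in\mathcal{K}$ is orthogonal, the condition $\Q=\L\Q\L^T$ is equivalent to the commutation relation $\Q\L=\L\Q$. So it suffices to fix an arbitrary $\L\in\mathcal{K}$ and prove that $\Q_t\L=\L\Q_t$ given that $\Q_0\L=\L\Q_0$ and $\Q_1\L=\L\Q_1$.

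The core of the argument is a lemma on functional calculus: if a symmetric positive definite matrix $\M$ commutes with $\P$, then $f(\M)$ commutes with $\P$ for any (scalar) function $f$ applied in the spectral sense, including $\M^{1/2}$, $\M^{-1/2}$, and the matrix power $\M^t$. The cleanest way to see this is that commuting with $\P$ means $\P$ preserves each eigenspace of $\M$, and $f(\M)$ acts as a scalar on each such eigenspace, hence $\P$ also commutes with $f(\M)$; alternatively one can use the polynomial (or limit-of-polynomials) representation of $f(\M)$. I would also record the trivial facts that the set of matrices commuting with a fixed $\P$ is closed under products and under taking (continuous) functions of a single such matrix.

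With these tools the proof is a short chain. From $\L\Q_0=\Q_0\L$ we get $\L\Q_0^{\pm 1/2}=\Q_0^{\pm 1/2}\L$; combining with $\L\Q_1=\Q_1\L$ and closure under products, the congruence $\M:=\Q_0^{-1/2}\Q_1\Q_0^{-1/2}$ commutes with $\L$. Note that $\M$ is symmetric positive definite, so $\M^t$ is well defined and, by the lemma, also commutes with $\L$. Finally $\Q_t=\Q_0^{1/2}\M^t\Q_0^{1/2}$ is a product of three matrices each commuting with $\L$, hence $\Q_t\L=\L\Q_t$, i.e. $\Q_t=\L\Q_t\L^T$. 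Since $\L\in\mathcal{K}$ was arbitrary and $\Q_t\in\mathcal{P}(p)$ throughout, we conclude $\Q_t\in\mathcal{F}(\mathcal{K})$ for all $t\in[0,1]$, which is exactly $g$-convexity.

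The only delicate point — and the one I would make sure to state carefully — is the spectral/functional-calculus lemma: that commuting with $\P$ is preserved under passing to $\M^{1/2}$, $\M^{-1/2}$ and $\M^t$. Everything else (the reformulation via orthogonality, closure under products, assembling $\Q_t$) is routine. One should also make sure $\M$ and $\Q_0$ are genuinely positive definite so that the real powers are unambiguously defined, but that is guaranteed since $\Q_0,\Q_1\in\mathcal{P}(p)$.
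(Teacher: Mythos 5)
Your proposal is correct and follows essentially the same route as the paper's proof: reformulate the invariance $\Q=\L\Q\L^T$ as commutation $\Q\L=\L\Q$ using orthogonality of $\L$, invoke the functional-calculus fact that commutation is preserved under spectral functions (giving $\Q_0^{\pm1/2}$ and the $t$-power of $\Q_0^{-1/2}\Q_1\Q_0^{-1/2}$) together with closure under products, and assemble $\Q_t$. Your version is, if anything, slightly more careful in justifying the spectral lemma via eigenspace preservation, but the argument is the same.
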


\section{Examples and Applications}

In this section we provide examples of group symmetry constraints which appear in real
world covariance estimation problems.

\subsection{Circulant}
A common class of symmetry constrained covariances is the set of positive definite
circulant matrices:
$$
\C =
 \begin{pmatrix}
  c_0 & c_1 & c_2 & \dots & c_{n-1} \\
  c_{n-1} & c_0 & c_1 & \dots & c_{n-2}\\
  \vdots & \vdots & \vdots & \ddots & \vdots \\
  c_1 & c_2 & c_3 & \dots & c_0
  \end{pmatrix}.
$$
Such matrices are typically used as an approximation to Toeplitz structured matrices which
are associated with signal processing in stationary environments  \cite{dembo},
\cite{circ}. It is easy to see that the set of circulant matrices can be expressed as
$\mathcal{F}(\mathcal{K})$ with $\mathcal{K}$ being the cyclic group of order $n$ which
acts on the rows of the
matrix by shifts. Thus, an immediate corollary of Theorem \ref{main_theorem} it that the
set of circulant matrices is $g$-convex.

%In many applications the covariance matrix appears to be circulant due to some specific symmetry of the random variables.
%\begin{definition}
%A real circulant matrix is a square matrix of the form
%\end{definition}
%
%The positive definite circulant matrices form a subset $\mathcal{C} \subset \mathcal{P}(p)$
%invariant under the action of the cyclic group of order $n$ which act on the rows of the
%matrix by shifts. Such shifts can be represented by permutation matrices $\Pi$:
%$\mathcal{C} = \{C \in \mathcal{M}|C = \Pi C \Pi^T, \forall \Pi \in C(p)\}$.

\subsection{Persymmetric}
Another class of symmetry constrained covariances is the set of positive definite
persymmetric matrices, i.e., matrices which are symmetric in the northeast-to-southwest diagonal $\P\J_n=\J_n\P^T$, where $\J$ is the exchange $n \times n$ matrix containing ones only on the northeast-to-southwest diagonal. Since we deal with symmetric matrices the constraint becomes $\P\J_n=\J_n\P$ and the matrix form is:
$$
\P =
 \begin{pmatrix}
  p_{11} & p_{12}  & \dots & p_{1n} \\
  p_{12} & p_{22} & \dots & p_{1n-1}\\
  \vdots & \vdots & \ddots & \vdots \\
  p_{1n-1} & p_{2n-1} & \dots & p_{12}\\
  p_{1n} & p_{1n-1} & \dots & p_{11}
  \end{pmatrix}.
$$
Such matrices are commonly encountered in radar systems using a symmetrically spaced
linear array with constant pulse repetition interval \cite{pascal_per}. This structure
information could be exploited to improve detection performance \cite{dimaio},
\cite{pascal_per}. This set can be expressed as $\mathcal{F}(\mathcal{K})$ with $\mathcal{K}$ consisting of $\I_n$ and $\J_n$. Thus, an immediate corollary of Theorem \ref{main_theorem} it that the set of persymmetric matrices is also $g$-convex.

Recently, \cite{pascal_per} extended the Tyler's covariance estimator to the case of
persymmetric matrices, proposed and analyzed the asymptotic behaviour of the fixed point
estimator. Theorem \ref{main_theorem} generalizes this result to other $g$-convex
optimizations, independent of the algorithm that finds the local minimum.

%
%\begin{definition}
%A matrix which is symmetric in  is called persymmetric.
%\end{definition}
%
%The set of symmetric positive definite persymmetric matrices can be thought
%of as a subset of $\mathcal{P}(p)$ invariant under the conjugation action
%of the group of permutations $S(p)$, where each $\Pi \in S(p)$ acts on the
%rows of the matrix by permuting them:
%$\mathcal{P} = \{P \in \mathcal{M}|P = \Pi P\Pi^T, \forall \Pi \in S(p)\}$.
%
%In the same way as before we deduce that the set $\mathcal{P} \subset \mathcal{P}(p)$ is $g$-convex.

\subsection{Proper Complex}
An important class of matrices is known as proper complex, or circularly symmetric covariance
matrices. In most radar and communication problems it is typical to work with complex
valued random variables which are invariant to rotations. A $p$-dimensional complex vector can be expressed as a $2p$-dimensional
real valued vector. Due to the symmetries, the associated
$2p\times 2p$ covariances belong to $\mathcal{F}(\mathcal{K})$ with $\mathcal{K}$ being an infinite set of rotations of the form \cite{kay}
\begin{eqnarray}
 \L_\theta=\begin{pmatrix}
  \cos\theta & \sin\theta\\
  -\sin\theta & \cos\theta  \end{pmatrix}\otimes \I_p,
\end{eqnarray}
which must hold for any $\theta$. This result already shows that the set is $g$-convex. However, in order to efficiently exploit it, we also need a finite characterization.
\begin{prop}
The set of proper complex $2p\times 2p$ covariance matrices is equivalent
to $\mathcal{F}(\mathcal{K})$ with $\mathcal{K}$ consisting of $\L_0=\I_{2p}$ and
$\L_1=\left(\begin{smallmatrix} 0&1\\ -1&0 \end{smallmatrix}\right)\otimes \I_p$.
\end{prop}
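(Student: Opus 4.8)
The plan is to show the two descriptions of the fixed-point set coincide: let $\mathcal{K}_\infty = \{\L_\theta : \theta \in [0,2\pi)\}$ denote the full rotation group and $\mathcal{K}_2 = \{\L_0, \L_1\}$ the proposed finite generating set. The inclusion $\mathcal{F}(\mathcal{K}_\infty) \subseteq \mathcal{F}(\mathcal{K}_2)$ is immediate, since $\L_0 = \L_{\theta=0}$ and $\L_1 = \L_{\theta=\pi/2}$ both lie in $\mathcal{K}_\infty$. So the entire content is the reverse inclusion: any $\Q$ that commutes with $\L_1$ also commutes with $\L_\theta$ for every $\theta$.

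First I would recall from the proof of Theorem \ref{main_theorem} that $\Q = \L\Q\L^T$ is equivalent to $\Q\L = \L\Q$, so everything reduces to a commutation statement. The key observation is that $\L_1$ generates the whole one-parameter group: writing $\L_1 = \J_2 \otimes \I_p$ where $\J_2 = \left(\begin{smallmatrix} 0&1\\ -1&0 \end{smallmatrix}\right)$, one has $\J_2^2 = -\I_2$, hence $\L_1^2 = -\I_{2p}$, and the standard identity gives $\L_\theta = \cos\theta\,\I_{2p} + \sin\theta\,\L_1$ (this is just the $2\times 2$ rotation written in terms of $\I_2$ and $\J_2$, tensored with $\I_p$). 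Therefore, if $\Q\L_1 = \L_1\Q$, then $\Q\L_\theta = \cos\theta\,\Q + \sin\theta\,\Q\L_1 = \cos\theta\,\Q + \sin\theta\,\L_1\Q = \L_\theta\Q$ for every $\theta$, so $\Q \in \mathcal{F}(\mathcal{K}_\infty)$. Note $\L_0 = \I_{2p}$ contributes no constraint and is listed only so that $\mathcal{K}_2$ contains the identity; strictly it can be dropped.

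The routine remaining step is to note $\mathcal{K}_2$ is in fact a group (up to including $\L_1^2 = -\I_{2p}$ and $\L_1^3$, which commute with $\Q$ automatically once $\L_1$ does, so the fixed-point set is unchanged), so Theorem \ref{main_theorem} applies verbatim and yields $g$-convexity directly from the finite description — though this is already a consequence of $\mathcal{F}(\mathcal{K}_2) = \mathcal{F}(\mathcal{K}_\infty)$. I do not anticipate a genuine obstacle here; the only mild subtlety is making the algebraic identity $\L_\theta = \cos\theta\,\I_{2p} + \sin\theta\,\L_1$ explicit and checking the generating group is closed, both of which are one-line computations. The conceptual point worth stating clearly is that a dense (or, here, generating-after-exponentiation) subset of $\mathcal{K}_\infty$ suffices to cut out the same fixed-point set, which is what makes the infinite constraint practically usable.
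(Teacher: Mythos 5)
Your proof is correct and follows essentially the same route as the paper: the paper proves this proposition by declaring it a special case of its quaternion Proposition~2, whose proof is exactly your linear-combination argument (write the general rotation as $\cos\theta$ times the identity plus $\sin\theta$ times the generator, and use bilinearity of the commutator). You simply carry out that specialization explicitly for $\L_\theta=\cos\theta\,\I_{2p}+\sin\theta\,\L_1$, which is the same idea.
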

\begin{proof}
This is a particular case of the Proposition \ref{eq_quat} below.
\end{proof}

Thus, $g$-convex maximum likelihood problems with proper complex constraints can be globally and efficiently solved. As special cases this includes proper complex versions of Tyler's estimator and MGGD solutions. We note that this result is not surprising. Recently, most of these complex multivariate settings have been analyzed \cite{poor,ollila}. However, previous approaches were highly specific, and relied on defining new complex distributions. Our framework allows a unified treatment based on the real valued distributions with a single additional $g$-convex constraint.
\subsection{Proper quaternion}\label{section_0}
Another modern class of covariance matrices is known as proper quaternion \cite{quat_properness}. Quaternions are a generalization of complex numbers and is a $4$-dimensional vector space over reals, so that a length $p$ quaternion vector can be dealt with as a length $4p$ real vector. Typical applications are complex electromagnetic signals with two polarizations \cite{quaternion_polar,alba}. Similarly to the complex case, here too it is common to consider proper distributions, which are invariant to specific quaternion rotations. A $4p\times 4p$ proper quaternion covariance belongs to  $\mathcal{F}(\mathcal{K})$ with $\mathcal{K}$ being an infinite set of rotations of the form
\begin{eqnarray}\label{real_matrix}
\L_{\theta \alpha \beta \gamma}={\scriptsize{\[
\begin{array}
 {cccc} \cos({\theta}) & \alpha \sin({\theta}) & \beta \sin({\theta}) & \gamma
 \sin({\theta}) \\
 -\alpha \sin({\theta}) & \cos({\theta}) & - \gamma \sin({\theta}) & \beta \sin({\theta})
 \\
 -\beta \sin({\theta}) & \gamma \sin({\theta}) & \cos({\theta}) & -\alpha \sin({\theta})
 \\
 -\gamma \sin({\theta}) & -\beta \sin({\theta}) & \alpha \sin({\theta}) & \cos({\theta})
 \\
\end{array}
\]}}\otimes \I_p,
\end{eqnarray}
which must hold for $\theta, \alpha, \beta, \gamma$ satisfying $\alpha^2+\beta^2+\gamma^2 = 1$.
The next result characterizes this set using a finite number of constraints.
\begin{prop}\label{eq_quat}
The set of proper quaternion $4p\times 4p$ covariance matrices is equivalent
to $\mathcal{F}(\mathcal{K})$ with $\mathcal{K}$ consisting of $\L_0=\I_{4p}$, $\L_1=\R_1 \otimes \I_p$, $\L_2=\R_2 \otimes \I_p$, $\L_3=\R_3 \otimes \I_p$, $\L_4=-\L_0$, $\L_5=-\L_1$, $\L_6=-\L_2$ and $\L_7=-\L_3$, where
 \begin{equation}
\R_1=\left(\begin{smallmatrix}
0&1&0&0\\-1&0&0&0\\0&0&0&-1\\0&0&1&0\end{smallmatrix}\right),
\R_2=\left(\begin{smallmatrix}
0&0&1&0\\0&0&0&1\\-1&0&0&0\\0&-1&0&0\end{smallmatrix}\right),
\R_3=\left(\begin{smallmatrix}
0&0&0&1\\0&0&-1&0\\0&1&0&0\\-1&0&0&0\end{smallmatrix}\right).
 \end{equation}
\end{prop}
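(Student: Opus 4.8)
The plan is to reduce invariance under the infinite family of rotations (\ref{real_matrix}) to commutation with a handful of fixed matrices, and then to identify that handful with the set in the statement. I will take the proper quaternion set to be, by definition, $\mathcal{F}(\mathcal{K}_\infty)$, where $\mathcal{K}_\infty$ denotes the family (\ref{real_matrix}). The starting point is the block identity
\begin{equation}\label{blockid}
\L_{\theta\alpha\beta\gamma}=\big(\cos\theta\,\I_4+\sin\theta\,(\alpha\R_1+\beta\R_2+\gamma\R_3)\big)\otimes\I_p,
\end{equation}
which is checked row by row directly from (\ref{real_matrix}) and the explicit $\R_1,\R_2,\R_3$. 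As in the proof of Theorem \ref{main_theorem}, for orthogonal $\L$ the condition $\Q=\L\Q\L^{T}$ is equivalent to $\Q\L=\L\Q$. Writing $\mathcal{K}_8=\{\L_0,\dots,\L_7\}$, the proposition is therefore equivalent to: a positive definite $\Q$ commutes with every element of $\mathcal{K}_\infty$ if and only if it commutes with every element of $\mathcal{K}_8$.

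For this I would use the elementary fact that a matrix commutes with every element of a set $S$ exactly when it commutes with every element of the real linear span of $S$. Set $V=\mathrm{span}_{\mathbb{R}}\{\I_{4p},\,\R_1\otimes\I_p,\,\R_2\otimes\I_p,\,\R_3\otimes\I_p\}$. By (\ref{blockid}), $\mathcal{K}_\infty\subset V$. Conversely the four spanning matrices of $V$ all lie in $\mathcal{K}_\infty$: take $\theta=0$ for $\I_{4p}$, and $\theta=\pi/2$ with $(\alpha,\beta,\gamma)$ equal to $(1,0,0)$, $(0,1,0)$, $(0,0,1)$ -- all admissible since $\alpha^2+\beta^2+\gamma^2=1$ -- for $\R_1\otimes\I_p$, $\R_2\otimes\I_p$, $\R_3\otimes\I_p$. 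Hence $\mathrm{span}_{\mathbb{R}}\mathcal{K}_\infty=V$, so commuting with all of $\mathcal{K}_\infty$ is the same as commuting with $\R_1\otimes\I_p$, $\R_2\otimes\I_p$, $\R_3\otimes\I_p$. Commutation with $\L_0=\I_{4p}$ is automatic, and commutation with $\L_j$ and with $-\L_j$ coincide, so $\L_4,\dots,\L_7$ impose nothing new; thus commuting with all of $\mathcal{K}_\infty$ is equivalent to commuting with all of $\mathcal{K}_8$, which gives $\mathcal{F}(\mathcal{K}_\infty)=\mathcal{F}(\mathcal{K}_8)$. The corresponding statement for proper complex $2p\times 2p$ covariances is the obvious two-dimensional specialization, in which only $\R_1$ -- i.e.\ $\left(\begin{smallmatrix}0&1\\-1&0\end{smallmatrix}\right)$ -- survives and $V=\mathrm{span}_{\mathbb{R}}\{\I_{2p},\L_1\}$.

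For consistency with the standing convention that $\mathcal{K}$ be a multiplicative group, I would also note that $\{\pm\I_4,\pm\R_1,\pm\R_2,\pm\R_3\}$ is closed under multiplication: $\R_i^{2}=-\I_4$ and $\R_1\R_2=\R_3=-\R_2\R_1$ together with their cyclic analogues, so this set is a faithful copy of the quaternion group $Q_8$, and tensoring with $\I_p$ preserves the group law. This is what forces the negatives $\L_4,\dots,\L_7$ into $\mathcal{K}_8$; it is not needed for the set equality above, since $\mathcal{F}(\mathcal{K})$ depends only on the subgroup generated by the chosen matrices.

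I expect the only real labor to be the sign bookkeeping: confirming (\ref{blockid}) and the quaternion multiplication relations with the exact conventions of (\ref{real_matrix}) and of the proposition statement. There is no genuine conceptual obstacle: beyond the equivalence $\Q=\L\Q\L^{T}\Leftrightarrow\Q\L=\L\Q$, no positivity or analytic input enters, and the rest is the one-line linear-span argument.
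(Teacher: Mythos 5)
Your proposal is correct and follows essentially the same route as the paper: necessity by specializing $(\theta,\alpha,\beta,\gamma)$ so that each $\L_i$ is a particular case of (\ref{real_matrix}), and sufficiency by writing $\L_{\theta\alpha\beta\gamma}$ as the linear combination $\cos\theta\,\L_0+\alpha\sin\theta\,\L_1+\beta\sin\theta\,\L_2+\gamma\sin\theta\,\L_3$ and using linearity of the commutation condition. Your linear-span phrasing and the $Q_8$ closure remark are just slightly more explicit packaging of the paper's one-line argument.
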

\begin{proof} The matrices $\L_i$ for $i=0,\dots,7$ are particular cases of (\ref{real_matrix}),
so the necessity is obvious. Assume now that $\Q$ is invariant under $\L_i$
conjugation, meaning that $\Q$ commutes with them: $\Q \L_i = \L_i \Q$ and we are given some matrix $\R$ of the form (\ref{real_matrix}). Take the equalities $\Q \L_i = \L_i \Q, i=0,1,2,3$,  multiply them by $\cos(\theta), \alpha \sin(\theta), \beta \sin(\theta), \gamma \sin(\theta)$
correspondingly and add them up to get: $\Q \R = \R \Q$.
\end{proof}

In other words, the set of proper quaternion covariance matrices is $g$-convex. Thus, we can easily extend the $g$-convex estimates of Tyler and MGGD to the quaternion case, and guarantee that any descent algorithm will converge to the global solution.

\section{Minimization algorithm}
In this section, we address the numerical optimization of the above minimizations. Various numerical techniques can be used to find local minimas. Since the problems are $g$-convex these local minimas will also be the global solution. The negative-log-likelihoods in (\ref{ml1})-(\ref{ml2}) have the form \cite{ami3}:
\begin{equation}
\L(\Q)=\frac{1}{n}\sum_{i=1}^n\rho(\mathbf{s}_i^T \Q^{-1} \mathbf{s}) + \log |\Q|.
\end{equation}
For simplicity, we consider the classical iterative reweighed scheme:
\begin{equation} \label{iter_scheme}
\Q_{k+1}=\frac{1}{n}\sum_{i=1}^nu(\mathbf{s}_i^T \Q^{-1} \mathbf{s}_i)\mathbf{s}_i\mathbf{s}_i^T,
\end{equation}
where $u(x)=\rho'(x)$.

Following \cite{venkat}, we note that adding the $g$-convex constraints in the form of symmetry is equivalent to replicating the sample measurements. Given $n$
$p$-dimensional measurements $\{s_i\}_{i=1}^n$ the symmetrization is equivalent to
generating synthetically $|\mathcal{K}|$ new measurements from each one, thus getting
$|\mathcal{K}|n$ samples $\{\L s_i\}_{i=1, \L\in\mathcal{K}}^n$ instead of $n$. This generalizes the iterative scheme \label{iter_scheme} as follows:
\begin{equation} \label{iter_scheme}
\Q_{k+1}=\frac{1}{|\mathcal{K}|n}\sum_{\L \in \mathcal{K}} \sum_{i=1}^nu((\L \mathbf{s}_i)^T \Q^{-1} (\L \mathbf{s}_i))(\L \mathbf{s}_i)(\L \mathbf{s}_i)^T.
\end{equation}
A simple minimization majorization argument can be used to show that this iteration leads to a descent method, see for example \cite{ami1}.

\section{Numerical Results}
For numerical simulations, we chose Tyler's scatter estimate in proper quaternion distributions. We have generated a proper real covariance
matrix $\Q_0$ and generated elliptically distributed $10$-dimensional quaternion random vectors as $s_i = \sqrt{\tau} \mathbf{v}$, where $\tau \sim \chi^2$ and $\mathbf{v}$ is zero-mean normally distributed with covariance matrix $\Q_0$. We choose $\rho(x)=p \log(x)$ to get the Tyler's covariance estimator \cite{tyler}.
\begin{center}
\includegraphics[scale=0.4]{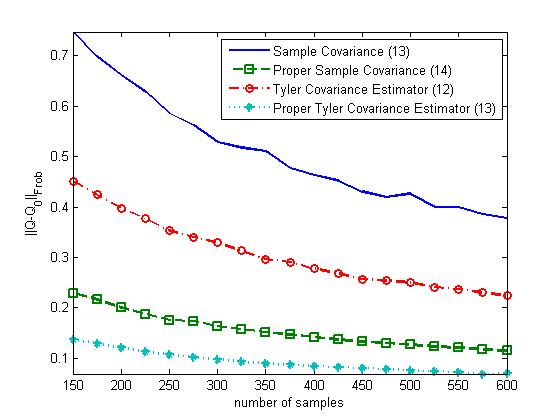}
\end{center}

We compare four different covariance estimators:
\begin{itemize}
 \item Sample Covariance
    \begin{equation}
        \Q_{SC}=\frac{1}{n}\sum_{i=1}^n \mathbf{s}_i \mathbf{s}_i^T,
    \end{equation}
 \item Proper Sample Covariance
    \begin{equation}
        \Q_{PSC} =  \frac{1}{|\mathcal{K}|n}\sum_{\L \in \mathcal{K}} \sum_{i=1}^n \L \mathbf{s}_i \mathbf{s}_i^T \L^T,
    \end{equation}
 \item Tyler Covariance Estimator Iteration
 \begin{equation}\label{tyler_iter}
      \Q_{k+1} = \frac{p}{n} \sum_{i=1}^n \frac{\mathbf{s}_i \mathbf{s}_i^T}{\mathbf{s}_i^T
      \Q_{k}^{-1} \mathbf{s}_i}.
\end{equation}
 \item Tyler Proper Covariance Estimator Iteration
    \begin{eqnarray}\label{tyler_proper_iter}
\begin{split}
& \Q_{k+1} = \frac{p}{|\mathcal{K}|n} \sum_{\L \in \mathcal{K}} \sum_{i=1}^n \frac{\L
\mathbf{s}_i \mathbf{s}_i^T \L^T}{\mathbf{s}_i^T \L^T \Q_{k}^{-1} L \mathbf{s}_i} \\
& = \frac{p}{|\mathcal{K}|n} \sum_{\L \in \mathcal{K}} \sum_{i=1}^n \frac{(\L \mathbf{s}_i) (\L \mathbf{s}_i)^T}{(\L
\mathbf{s}_i)^T \Q_{k}^{-1} (\L \mathbf{s}_i)}.
\end{split}
\end{eqnarray}
\end{itemize}

We repeat the computations for $100$ times for the four estimators with $150-600$ samples. In order to make the results consistent we divide all the matrices by their traces.

\section{Acknowledgement}
This work was partially supported by Israel Science Foundation Grant No. 786/11 and Kaete Klausner Scholarship. The authors would like to thank Alba Sloin for numerous and helpful discussions.


\begin{thebibliography}{10}

\bibitem{covest_imp1} Krim H., Viberg M.,
     {\em Two decades of array signal processing research: The parametric approach},
     {\em IEEE Signal Process. Mag., vol. 13, no. 4, pp. 67–94}, July 1996.
  \bibitem{covest_imp2}
     Dougherty E. R., Datta A., Sima C.,
     {\em “Research issues in genomic signal processing},
     {\em IEEE Signal Process. Mag., vol. 22, no. 6, pp. 46–68}, November 2005.

  %\bibitem{kobayashi} Kobayashi Sh., {\em Transformation Groups in Differential Geometry}, {\em Springer}, 1991.

  \bibitem{abr} Abramovich Y. I., Spencer N. K., {\em Diagonally loaded normalised
     sample matrix inversion (LNSMI) for outlier-resistant adaptive filtering}, {\em
     IEEE International
     Conference on Acoustics, Speech and Signal Processing, ICASSP, vol. 3}, 2007.

  \bibitem{band} Bandiera F., Besson O., Ricci G., {\em Knowledge-aided covariance
     matrix estimation and adaptive detection in compound-Gaussian noise}, {\em IEEE
     Transaction on Signal Processing, vol. 58, no. 10, pp. 5391–5396}, 2010.

  \bibitem{chen} Chen Y., Wiesel A., Hero A. O., {\em Robust shrinkage estimation
     of high-dimensional covariance matrices}, {\em IEEE Transactions
     on Signal Processing, vol. 59, no. 9, pp. 4097–4107}, 2011.

  \bibitem{ami1} Wiesel A., {\em Unified framework to regularized covariance
     estimation in scaled Gaussian models}, {\em IEEE Transactions on Signal Processing, vol. 60, no. 1, pp. 29 –38}, 2012.

  \bibitem{ami2} Wiesel A., {\em Regularized covariance estimation in scaled gaussian
     models}, {\em 4th IEEE International Workshop on Computational Advances in Multi-Sensor Adaptive Processing (CAMSAP) pp. 309–312}, 2011.

  \bibitem{ami3} Zhang T., Wiesel A., Greco M. S., {\em Multivariate Generalized Gaussian
      Distribution: Convexity and Graphical Models}, {\em http://arxiv.org/pdf/1304.3206.pdf}.

  \bibitem{venkat} Shah P., Chandrasekaran V., {\em Group Symmetry and
     Covariance Regularization}, {\em Electronic Journal of Statistics, vol. 6, pp.
     1600-1640}, 2012.

  \bibitem{tyler} Tyler D. E., {\em A distribution-free M-estimator of multivariate
      scatter}, {\em The Annals of Statistics, vol. 15, no. 1, pp. 234-251}, 1987.

  \bibitem{quat_properness} Ramírez D. J.~Vi´a, Santamaría I., {\em Properness and widely
      linear processing of quaternion random vectors}, {\em IEEE Transactions on Signal
      Processing, vol. 56, no. 7, pp. 3502-3515}, 2010.

  \bibitem{pascal1} Pascal F., Chitour Y., Ovarlez J.-F., Forster P., Larzabal P., {\em
      Covariance Structure Maximum-Likelihood Estimates in Compound Gaussian Noise:
      Existence and Algorithm Analysis}, {\em IEEE Transactions on Signal Processing, vol.
      56, no. 1}, January 2008.

  \bibitem{ami} Wiesel A., {\em Geodesic convexity and covariance estimation},
     {\em IEEE Transactions on Signal Processing, vol. 60, no. 12}, January 2012.

  \bibitem{kay} Kay S. M., {\em Fundamentals of Statistical Signal Processing: Estimation
      Theory, Volume 1}, {\em Prentice-Hall PTR}, 1998.

  \bibitem{dimaio} {\em Maximum likelihood estimation of structured persymmetric
      covariance matrices}, {\em Signal Processing Volume 83, Issue 3, Pages 633–640}, March 2003,

  \bibitem{pascal_per} Pailloux G., Forster P., Ovatlez J.-P., Pascal F., {\em
      Persymmetric Adaptive Radar Detectors},
    {\em IEEE Transactions on Aerospace and Electronic Systems, vol. 47, no. 4, pp.
    2376-2389}, October 2011.

  \bibitem{dembo} Dembo A., Mallows C. L., Shepp L. A., {\em Embedding Nonnegative
      Definite Toeplitz Matrices in Nonnegative Definite Circulant Matrices, with
      Application to Covariance Estimation},
    {\em IEEE Transactions on Information Theory, vol. 35, no. 6, pp. 2376-2389},
    November 1989.

  \bibitem{rapcsak} Rapcsak T., {\em Geodesic convexity in nonlinear optimization},
    {\em Journal of Optimization Theory and Applications, vol. 69, pp. 169-183}, 1991.

  \bibitem{circ} Cai T. T., Ren Z., Zhou H. H., {\em Optimal Rates of Convergence for
    Estimating Toeplitz Covariance Matrices, http://www.stat.yale.edu/~hz68/Toeplitz.pdf}

  \bibitem{ellipt} Frahm G., {\em Generalized Elliptical Distributions: Theory and
      Applications}, {\em PhD thesis, Univesity of Keln}, 2004.

  \bibitem{ami4} Chen Y., Wiesel A., Hero A. O., {\em Robust Shrinkage Estimation of High-dimensional Covariance Matrices}, {\em http://arxiv.org/pdf/1009.5331.pdf}

  \bibitem{levina}
    Bickel P. J., Levina E., {\em Regularized Estimation of Large Covariance Matrices},
    {\em Annals of Statistics, vol. 36, no. 1, pp. 199–227}, 2008.

  \bibitem{alba} Sloin A. Wiesel A., {\em Proper Quaternion Gaussian Graphical Models}, in preparation.

  \bibitem{poor} Ollila E., Tyler D. E., Koivunen, V., Poor H. V., {\em Complex
  Elliptically Symmetric Distributions: Survey, New Results and Applications},
  {\em IEEE Transactions on Signal Processing, vol. 60, no. 11, pp. 5597-5625}, November
  2010.

  \bibitem{ollila} Ollila E., Tyler D. E., {\em Distribution-free detection under complex
elliptically symmetric clutter distribution}, {\em  IEEE 7th Sensor Array and Multichannel
Signal Processing Workshop (SAM)}, 2012.

\bibitem{quaternion_polar} Miron S., Le Bihan N., Mars J. I., {\em Quaternion-MUSIC foe Vector-Sensor Array Processing}, {\em IEEE Transactions on Signal Processing, vol. 56, no. 7, pp. 1218-1229}, April 2006.
    
\end{thebibliography}
\end{document}